\newtheorem{theorem}{Theorem}[section]
\newtheorem*{claim}{Claim}
\newtheorem{defn}{Definition}[section]
\title{Learning to Generate Equitable Text in Dialogue \\ from Biased Training Data}
\author{Anthony Sicilia \\
  Intelligent Systems Program \\
  University of Pittsburgh \\
  \texttt{anthonysicilia@pitt.edu} \\\And
  Malihe Alikhani \\
  School of Computing and Information\\
  University of Pittsburgh \\
  \texttt{malihe@pitt.edu} \\}
\begin{document}
\maketitle
\begin{abstract}
The ingrained principles of fairness in a dialogue system's decision-making process and generated responses are crucial for user engagement, satisfaction, and task achievement. Absence of equitable and inclusive principles can hinder the formation of common ground, which in turn negatively impacts the overall performance of the system.
For example, misusing pronouns in a user interaction may cause ambiguity about the intended subject. Yet, there is no comprehensive study of equitable text generation in dialogue. Aptly, in this work, we use theories of computational learning to study this problem. We provide formal definitions of equity in text generation, and further, prove formal connections between learning human-likeness and learning equity: algorithms for improving equity ultimately reduce to algorithms for improving human-likeness (on augmented data). With this insight, we also formulate reasonable conditions under which text generation algorithms can learn to generate equitable text without any modifications to the biased training data on which they learn. To exemplify our theory in practice, we look at a group of algorithms for the GuessWhat?! visual dialogue game and, using this example, test our theory empirically. Our theory accurately predicts relative-performance of multiple algorithms in generating equitable text as measured by both human and automated evaluation. 
\end{abstract}

\section{Introduction}
\label{sec:intro}
Machine learning models for text generation in dialogue have trouble learning the ``long tail'' of a data distribution; i.e., the data concepts not frequently observed during training. For example, dataset biases like gender imbalance can induce a long tail in training data whereby important data relationships involving gender are underrepresented, like women in sports \citep{hendricks2018women}. When training, generative models often fail to learn these concepts in the long tail, and ultimately, learn inequitable, stereotyping behaviors instead (see Figure~\ref{fig:problem}). These non-inclusive behaviors not only decrease user-satisfaction by isolating users \citep{mehrabi2021survey}, but also impede common ground, hindering the task-success of the dialogue system. 
\begin{figure}
    \centering
    \includegraphics[width=.95\columnwidth]{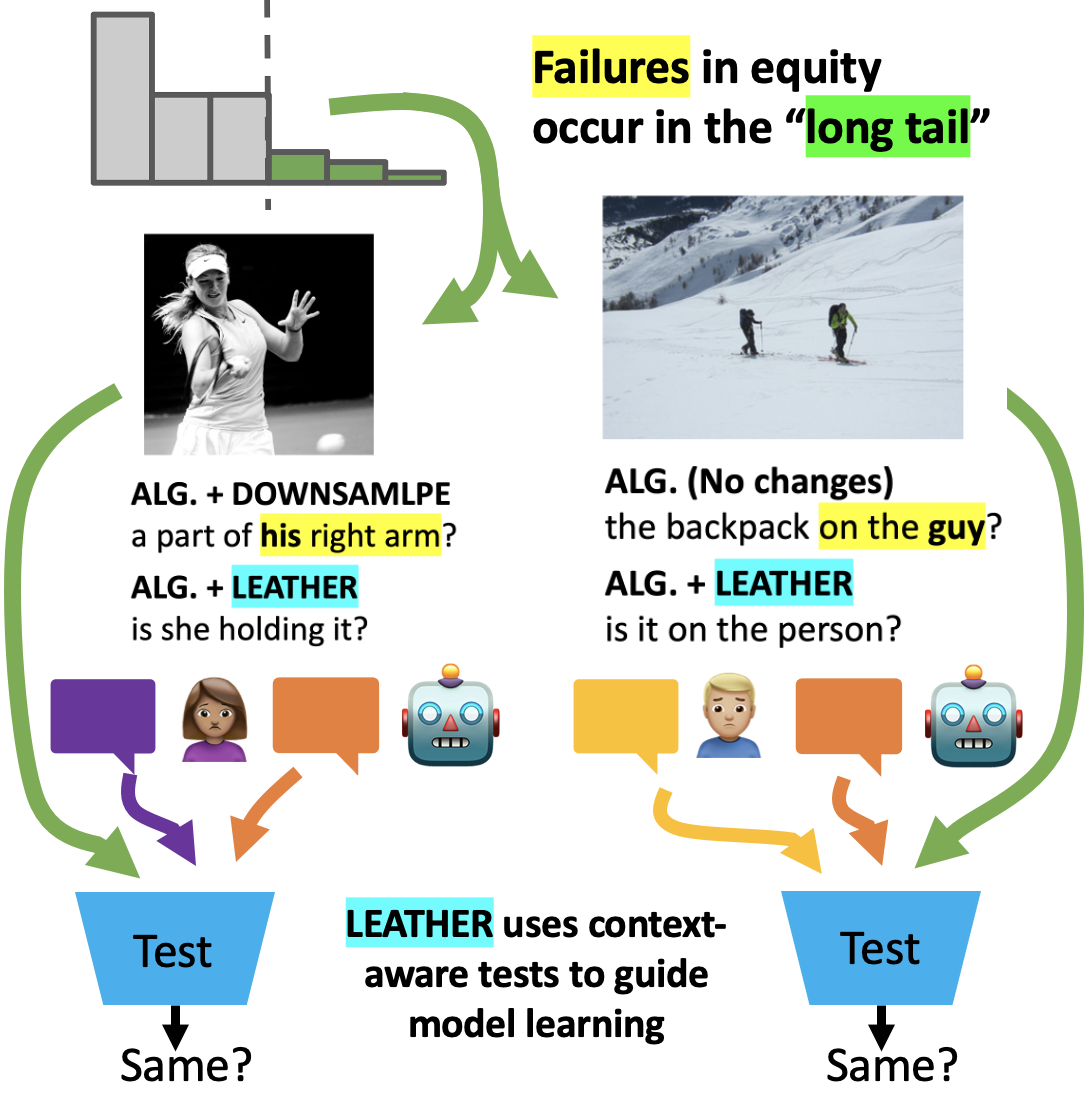}
    \caption{Examples from \textit{GuessWhat?!} dataset, which consists of game dialogues where a question-player queries an answer-player to identify a secret goal object known only to the answer-player. Images containing men outnumber images with women 2 to 1, forming a "long tail" in the data distribution. In examples above, human annotations use visual (contextual) cues to agree on gender or balk when appropriate. Traditional algorithms can be incorrect and overconfident, inheriting dataset bias towards male pronouns. Algorithms motivated by learning theory ($\texttt{LEATHER}$) are more robust, utilizing context in a human-like way.}
    \label{fig:problem}
\end{figure}

Despite the multi-faceted impact of inequitable text generation in dialogue, we do not have a comprehensive and theoretically grounded framework for understanding how machines learn to generate inequitable text and when this outcome can be avoided. To provide a strong technical foundation for equitable generation in dialogue, we build on theories of computational learning \citep{valiant1984theory, mcallester1998some}.
Specifically, our theoretical contributions are as follows:
\begin{enumerate}[leftmargin=*,nolistsep]
    \item We define precise constraints that encapsulate diverse notions of equity in dialogue (Def.~\ref{defn:sp}).
    \item We rigorously compare our proposals to traditional notions of equity in classification (\S~\ref{sec:theory_defn}). 
    \item We show computational learning theory models equitable learning well: algorithms from learning theory are easily adapted to learn equitable dialogue by augmenting data (Thm.~\ref{thm:sp_reduction}).
    \item We prove algorithms based on learning theory can even learn to generate equitable text from some types of biased training data (Thm.~\ref{thm:learning_bound}).
\end{enumerate}

Loosely, Thm.~\ref{thm:learning_bound} is based on the idea that, when provided sufficient background, human text is \textit{not} biased because it is typically \textit{context-aware} (Def.~\ref{defn:c-awar}). For example, when the subject is a female scientist, a human will likely \textit{not} use male pronouns in subject-referring conversation because humans tend to correctly employ dialogue context to inform their language use.
Instead, in many real-world datasets, bias is an \textit{aggregate property}, arising from inequality of the proportions of protected attributes such as race or gender; e.g., more conversations about male than female doctors. 

The theoretical understanding we contribute is imperative because it informs algorithm design. In particular, using our theory, we can predict: 
\begin{enumerate}[leftmargin=*,nolistsep]
    \item the most equitable algorithms for unseen data;
    \item counter-intuitive properties of algorithms that lead to less equitable results.
\end{enumerate}
For example, consider algorithms which na\"ively augment data to remove bias \citep{zhao-etal-2018-gender, park-etal-2018-reducing}. Through theoretical study, we identify cases where this practice can actually \textit{hurt} an algorithm's chances at learning to be equitable. In fact, our experiments in \S~\ref{sec:experiments} confirm this.

The remainder of the paper is organized as follows: \S~\ref{sec:background} provides background to position our contributions including discussion of related work, a brief tutorial on the employed learning theoretic framework, and a few running examples used throughout the text; \S~\ref{sec:theory} provides our theoretical contributions including formulation of mathematical notions of equity in text generation and theoretical analysis of learning algorithms; \S~\ref{sec:experiments} conducts experiments which validate our theory in practice; and finally, \S~\ref{sec:conclusion} concludes the work. Code, data, and a python package will be made publicly available to promote further research.\footnote{\href{https://github.com/anthonysicilia/equitable-dialogue-ACL2023}{ https://github.com/anthonysicilia/equitable-dialogue-ACL2023}}

\section{Background and Related Work}
\label{sec:background}
\begin{figure*}
    \centering
    \includegraphics[width=.95\textwidth]{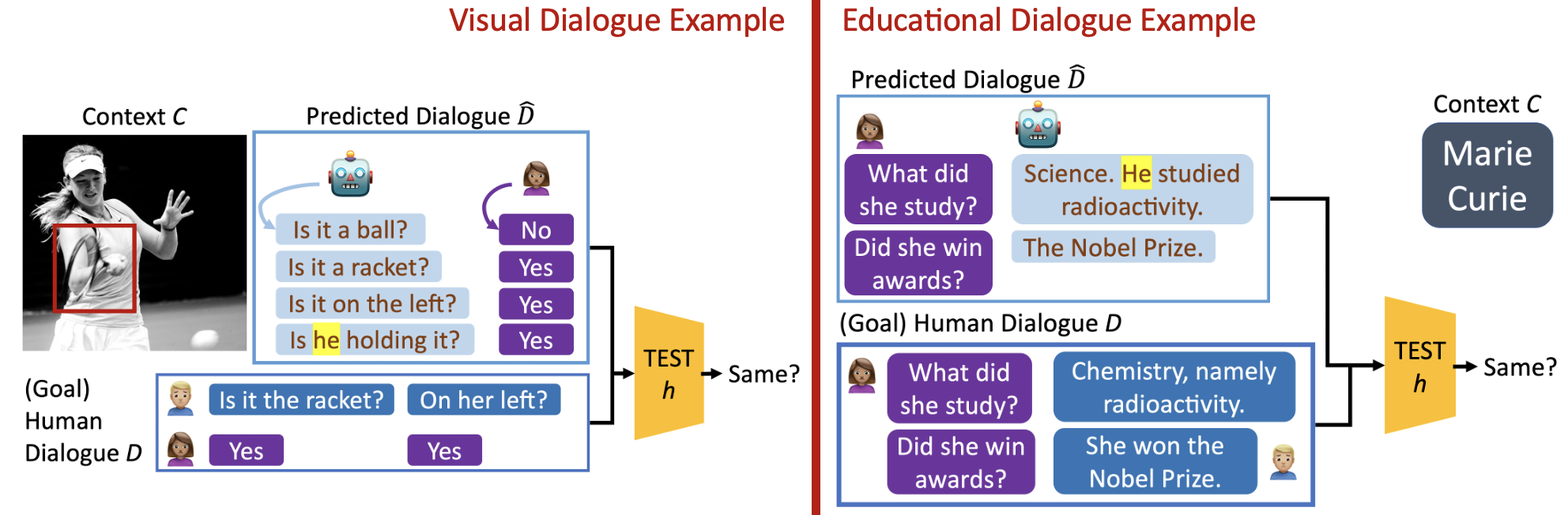}
    \caption{Toy examples of visual (left) and educational (right) dialogues. Key learning-theoretic terms are labeled. A common inequity of dialogue systems is pictured (i.e., misidentifying gender).}
    \label{fig:examples}
\end{figure*}
\subsection{Learning Theory for Dialogue}
\label{sec:bg_leather}
Recent proposals for the use of learning theory in dialogue are due to \citet{leather} who propose \texttt{LEATHER}.\footnote{\texttt{Lea}rning \texttt{Th}eory for Text-Gen\texttt{er}ation} Specifically, \texttt{LEATHER} is a formal framework for studying the diverse objectives present when learning to generate text. Ultimately, their proposal is grounded in a general evaluation metric -- the \textbf{test divergence}. Intuitively, test divergence mimics practical evaluation, in which we conduct tests to evaluate the generated dialouge:%
\begin{equation}\small
\label{eqn:TD}
\begin{split}
    & \mathbf{TD}_\mathbb{G}(\theta) = \mathbf{E}[\lvert h(D, U) - h(\hat{D}, U) \rvert] \\ 
    & \text{where} \quad (C, D) \sim \mathbb{G}, \ \hat{D} \sim \mathbb{P}_\theta(C), \  U \sim \mathbb{U}.
\end{split}
\end{equation}%
Of course, there are a number of undefined terms here: specifically, the \textit{test} $h$, the \textit{context} $C$, the \textit{goal dialogue} $D$, the \textit{learned dialogue} $\hat{D}$, and the \textit{unobserved effects} $U$. Below, we explain each, using examples from Figure~\ref{fig:examples} to assist our exposition.
\paragraph{Goal Distribution} The \textbf{goal distribution} $\mathbb{G}$ is a joint probability distribution over dialogue contexts $c \in \mathcal{C}$ and dialogues $d \in \mathcal{D}$. For \citet{leather}, the \textit{goal} is to generate human-like text. So, as in the visual dialogue example in Figure~\ref{fig:examples}, the context might be an image/goal-object and the goal dialogue might be sampled from a (human) corpus of QA pairs with this context.
\paragraph{Learned Dialogue Distribution} The \textbf{learned dialogue distribution} is the probability kernel $\mathbb{P}_\theta(C)$ that provides a distribution over dialogues, conditional to the parameters $\theta$ learned by the machine (e.g., neural parameters) as well as the random dialogue context $C$. The precise manner in which dialogue occurs will vary from system to system, but typically involves a machine generating/prompting responses to/from human users as in Figure~\ref{fig:examples}. This interaction implicitly defines the random process through which a set of parameters $\theta$ and a random context $C$ produce a predicted dialogue $\hat{D}$. Importantly, the learning machine may not control every aspect of the process -- e.g., the human responses. Aptly, we encapsulate this unknown randomness by the distribution $\mathbb{P}_\theta(C)$. In some cases, we will consider the joint distribution of both (goal) contexts and learned dialogues; i.e., of the random tuple $(C, \hat{D})$. We write $\hat{\mathbb{G}}_\theta$ for this joint distribution.
\paragraph{Test Function with Unknown Effects} The final component is the \textbf{test function} (or simply \textit{test}) $h$. The test takes as its primary input a dialogue and returns a value in the interval $[0,1]$. Conceptually, a test can represent any evaluation process in which we are interested. For example, some tests commonly employed in practice include $n$-gram overlap metrics such as BLEU \citep{papineni2002bleu}, sentiment scores from a pre-trained classifier, or even a score attained through human evaluation. The \textit{unknown effect} $U \sim \mathbb{U}$ represents \textit{any} additional information needed to completely determine the outcome of the test. When the test is BLEU, $U$ simply takes the form of a reference dialogue to which the input dialogue is compared. For human evaluation, $U$ encapsulates all of the unknown variables that contribute to the randomness of a real-world experiment. Often, $U$ may not be needed. 
\paragraph{Interpretation} With terms defined, it is easy to see the test divergence is a direct comparison of the output of the test from the goal dialogue $D$ to the predicted dialogue $\hat{D}$, learned by our dialogue system. Larger test divergence indicates the learned dialogue fails to replicate the goal dialogue along the dimensions targeted by the test. For example, if the goal is human-likeness in the visual dialogue example from Figure~\ref{fig:examples}, a test might target question strategies \citep{shekhar-etal-2019-beyond}. 
Small test divergence in these cases indicates the learned dialogue uses similar strategies as the (human) goal.
\subsection{Related Works on Equity}
\label{sec:bg_related_works}
In natural language, popular, early studies of equity begin with avoiding stereotyping in learned model representations \citep{bolukbasi2016man}. This approach has continued to inspire many de-biasing techniques for learned representations \citep{zhao-etal-2018-learning, madras2018learning, wang-etal-2020-double} and evaluation techniques for the equity of representations \citep{caliskan2017semantics, ethayarajh2019understanding}. De-biasing and evaluation techniques for model representations have also been adapted for text-generation tasks \citep{escude-font-costa-jussa-2019-equalizing, yeo-chen-2020-defining, guo-etal-2022-auto}. 

Still, these model-intrinsic approaches to resolving inequity have proven subpar compared to model-extrinsic approaches, which focus directly on the downstream task \citep{gonen-goldberg-2019-lipstick, cao-etal-2022-intrinsic}. For this reason, our approach tackles the problem of equitable dialogue generation from an extrinsic point-of-view. Previously, in text-generation, extrinsic points-of-view have typically used change in scoring functions (e.g., for sentiment, gender-polarity, etc.) to measure equity \citep{liu2020does, vu-etal-2020-multimodal, dhamala2021bold, dhamala2022analysis, das2022quantifying}. Our work is in line with these, but provides formal theoretical study, and further, focuses more specifically on dialogue. Formal theoretical study is vital to understanding equity, because imprecision in problem assumptions and objectives has already proven to be a pitfall in existing works on equity \citep{blodgett-etal-2021-stereotyping}. For example, in classification, detailed theoretical study reveals a complex relationship of trade-offs between accuracy and (some) notions of equity \citep{zhao2019inherent, mcnamara2019costs, dutta2020there}, contributing to algorithmic advances \citep{zhao2019conditional}. Our work continues this trajectory, offering valuable practical insights, which are sometimes unintuitive, to achieve equity in machine dialogue. 

Finally, it is worthwhile to note that \citet{liu2020does} also contribute a formal, theoretical definition of fairness in dialogue. Our work contributes a more general definition of equity -- i.e., which supports arbitrary types of dialogue context and more general types of dataset bias. As noted, we also make connections with learning theory to provide key insights on algorithm and dataset design. Indeed, ours is the first work to study bias in text generation using these insightful techniques from computational learning theory.

\section{Formalizing Equity in Dialogue}
\label{sec:theory}
\subsection{Formal Definitions for Equity}
\label{sec:theory_defn}
In this part, we introduce some formal, mathematical notions of equity. We start with a general notion of equity in dialogue and show how this can be specialized to compare with ideas of equity in the classification literature. For proofs, see Appendix~\ref{sec:proofs}.
\paragraph{Protected Attributes} To begin, we need to first define the notion of a \textbf{protected attribute}. Conceptually, this is the sensitive variable (e.g., race, gender, religion, etc.) that we intend to ``protect'' by the equity constraint. Otherwise, presumably, system inequities would disproportionately, negatively impact the sub-population captured by the attribute. Throughout this work, we use a variable $a \in \mathcal{A} = \{0,1\}$ to denote the protected attribute and we measure equity of the text with respect to this variable. Precisely, $a=1$ implies the dialogue context exhibits the attribute (e.g., female gender, Black race, Muslim religion), while $a=0$ implies the context does not exhibit the protected attribute. For example, in the educational dialogue from Figure~\ref{fig:examples}, the context is a discussion topic and the protected attribute is female gender. Since the topic is a female scientist, it exhibits the protected attribute and we would have $a=1$. If the topic was ``Science'' more generally, it would not exhibit the protected attribute and it would be appropriate to set $a=0$. In general, we expect the protected attribute to vary \textit{randomly} with the dialogue context $C$. To model this in a general way, we assume the attribute is sampled from a probability distribution which is dependent on the random context: $A \sim \mathbb{A}(C)$.
For example, in the visual dialogue from Figure~\ref{fig:examples}, the protected attribute $A$ is female gender, which is non-deterministically dependent on the visual features of the image $C$. In other cases, like the educational example, the protected attribute may be completely determined by context. $\mathbb{A}$ can model this as well -- e.g., as a point mass.
\paragraph{Equity as Score Parity}
Commonly, equity in machine learning systems is formally defined through a notion of \textit{parity} \citep{kamiran2009classifying, zhao2019inherent}. In dialogue, we can express parity as the following requirement:
\begin{displayquote}
\textit{The system uses language in the same way, regardless of protected attribute.}
\end{displayquote}
This intuitive notion of equity is vague in its use of ``way'' to be general, allowing for specification to different applications. For example, \citet{das2022quantifying, dhamala2022analysis} both consider the \textit{toxicity} and \textit{sentiment} of language as the pertinent ``way'' in which language is used, when measuring equity. A classifier is used to estimate the toxicity or sentiment of the used language, and equity occurs if this classifier's outputs are invariant of the protected attribute. For example, if the protected attribute is Muslim religion, the dialogue should be no more ``toxic'' when its context is specific to Muslims, than when its context is not specific to Muslims. Below, we formalize this intuition for equity with a mathematical constraint.
\begin{defn}
\label{defn:sp}
(Score Parity) A contextualized dialogue distribution\footnote{Frequently, we use \textit{contextualized dialogue distribution} to refer to any joint distribution over contexts and dialogues.} $\mathbb{G}$ with $(C,D) \sim \mathbb{G}$ and $A \sim \mathbb{A}(C)$ satisfies \textbf{score parity} if
\begin{equation}\small
    \mathbf{E}[s(D, 0) \mid A = 0] = \mathbf{E}[s(D, 1) \mid A = 1]
\end{equation}
where $s$ is a scoring function $s : \mathcal{D} \times \mathcal{A} \to [0,1]$.
\end{defn}
To arrive at our motivating example \citep{das2022quantifying, dhamala2022analysis}, one simply chooses the scoring function $s$ to be a toxicity classifier or a sentiment classifier. The expected output of this classifier should be the same, regardless of the protected attribute's setting. In general, if equality does not hold in the above definition of parity, we follow \citet{zhao2019inherent} using $\Delta$ to denote the gap across attributes:%
\begin{equation}\small
\begin{split}
    \Delta(\mathbb{G}) = \lvert  \mathbf{E}[s(D,0)\mid A=0]
     - \mathbf{E}[s(D,1)\mid A=1] \rvert.
\end{split}
\end{equation}%
This lets us talk about degrees of inequity, and therefore, measure progress towards our ideals.
\paragraph{Multi-Category Score Parity}
Notice, we use the presence/absence of singular demographic groups (e.g., \textit{female} v. \textit{not female}) instead of binary comparisons (e.g., \textit{female} v. \textit{male}) in defining the protected attribute. This choice allows our definition of equity (above) and later theory to support study of general multi-category attributes with more than two attributes like race (e.g., Black, White, Asian) or religion (e.g., Muslim, Jewish, Catholic). 
Using race as an example, we can measure the parity gap when \textit{Black} is the protected attribute, \textit{White} is the protected attribute, \textit{Asian} is the protected attribute, etc. The dataset is then equitable for all races (according to score parity) if all measured parity gaps are 0. In this way, our definition and subsequent results can generalize to the multi-category case. We use this strategy, for example, in Section~\ref{sec:experiments}.  
\paragraph{Comparison to Demographic Parity}
In classification, \textit{demographic parity} is a commonly studied notion of equity \citep{kamiran2009classifying, calders2009building, zemel2013learning}, which stipulates that a classifier's outputs should be independent of the protected attribute. For a classifier $c$, mapping random features $X$ to a $\{0,1\}$-valued label, this can be written:%
\begin{equation}\small
\label{eqn:dp}
    \mathbf{E}[c(X) \mid A = 0] = \mathbf{E}[c(X) \mid A = 1].
\end{equation}%
For score parity, when $s(\cdot, 0) = s(\cdot, 1)$, the scoring function $s$ does not depend on the attribute and we see that score parity is a direct reflection of demographic parity. Whereas classification problems use machine learning to select the classifier $c$ in a fair way, dialogue uses machine learning to select the feature distribution $X$ (i.e., $D$ in our definition). 
\paragraph{Comparison to Accuracy Parity}
Depending on the application, it is known that demographic parity can also be an inappropriate constraint; e.g., if the classifier $c$ is meant to predict the protected attribute itself \citep{zhao2019inherent}. This precise situation is inherent to dialogue, since some aspects of language are compulsorily predictive of the protected attribute (e.g., gendered pronouns or religious terminology). Fundamentally, there is a trade off between the accuracy of the language used and the desired invariance. In these cases, \citet{zhao2019inherent} suggest \textit{accuracy parity} as an alternative, which requires equal error rates, regardless of protected attribute. For $Y$ the true label to $X$ and $c$ as in Eq.~\eqref{eqn:dp}, this can be written:%
\begin{equation}\small
    \mathbf{Pr}(c(X)\neq Y \mid A = 0) = \mathbf{Pr}(c(X)\neq Y  \mid A = 1).
\end{equation}%
By our definition, score parity can be used to reflect this distinct notion from classification as well. Conceptually, we select our scoring function to measure the correctness of the dialogue. Then, just like accuracy parity, score parity enforces equal error rates, regardless of protected attribute. While details may vary based on application, we consider selecting the scoring function in the examples from Figure~\ref{fig:examples}. We first define an \textbf{identifier} function $v : \mathcal{D} \to \{0,1\}$ which indicates whether a dialogue $d \in \mathcal{D}$ \textit{verbalizes} the protected attribute. For example, we can imagine $v$ scans for female gendered words $\{\text{she}, \text{her}, \text{girl}, ...\}$. Then, our system makes an ``error'' if it fails to verbalize the protected attribute or inappropriately verbalizes the attribute. So, we select the scoring function to reflect this:%
\begin{equation}\small
\label{eqn:ap_choice}
    s(D, A) = \lvert A - v(D) \rvert.
\end{equation}%
With the choice of scoring function above, score parity reflects the intuition of accuracy parity by requiring that the correctness of the language use (in referring to a protected attribute) is independent of the protected attribute.
As alluded, this constraint can be especially useful in case spurious correlations (i.e., stereotypes) between protected attributes and context cause different error rates with/without a protected attribute. This is the case in our toy examples (Figure~\ref{fig:examples}) as well as some real-world generation tasks \citep{hendricks2018women}. 
\paragraph{Takeaways} The formalization of equity we introduce -- \textit{score parity} -- is both general and useful. It models existing ideas for empirical evaluation of equity in text-generation \citep{hendricks2018women, das2022quantifying, dhamala2022analysis} and can also be used to model disparate notions of equity from existing classification theories \citep{kamiran2009classifying, calders2009building, zemel2013learning, zhao2019inherent}. Ultimately, the choice of the scoring function $s$ determines the ``way'' in which the language should be invariant to the protected attribute, and subsequently, dictates the motivating goals of the equity constraint. 
\subsection{Evaluating Equity with Learning Theory}
Next, we show how learning to generate equitable text can be modeled with learning theory.
\paragraph{Test Divergence (Reprise)}
To evaluate equity with \texttt{LEATHER}, the objective in Eq.~\eqref{eqn:TD} remains largely unchanged. Primarily, we explicitly incorporate the protected attribute:\footnote{Equivalently, one can group $A$ with the unknown effects and keep Eq.~\eqref{eqn:TD}. The rewrite only makes assumptions explicit.}%
\begin{equation}\small
\label{eqn:fairTD}
\begin{split}
    & \mathbf{TD}_\mathbb{G}(\theta) = \mathbf{E}[\lvert h(D, A, U) - h(\hat{D}, A, U) \rvert] \quad \text{where}\\ 
    & (C, D) \sim \mathbb{G}, \ \hat{D} \sim \mathbb{P}_\theta(C), \ A \sim \mathbb{A}(C), \ U \sim \mathbb{U}.
\end{split}
\end{equation}%
Importantly, we must consider the deviations from \citet{leather} \textit{not} present in Eq.~\eqref{eqn:fairTD}: (1) the choice of goal distribution $\mathbb{G}$ and (2) the choice of test $h$. Originally, \citeauthor{leather} focus on evaluation of \textit{human-like} dialogue, and therefore, propose the goal to be defined by any collected corpus of contextualized human dialogues. Instead, we are interested in the \textit{equity} of the contextualized dialogue and cannot blindly use human dialogue as an example; i.e., we cannot take for granted that the contextualized human dialogue is equitable. Thus, to appropriately evaluate equity, we generally assume the following constraints on the goal distribution and test.
\paragraph{Equitable Goals and Tests}
\begin{defn}
\label{defn:balanced}
(Balanced) A contextualized dialogue distribution $\mathbb{G}$ is \textbf{balanced} if it assigns equal (marginal) likelihood to the protected attribute:
\begin{equation}\small
    \mathbf{Pr}(A = 1) = \mathbf{Pr}(A = 0); \ (C,\cdot) \sim \mathbb{G}, \ A \sim \mathbb{A}(C).
\end{equation} 
\end{defn}
\begin{defn} 
\label{defn:eq_dist}
(Equitable Goal) We say a contextualized dialogue distribution $\mathbb{G}$ with $(C,D) \sim \mathbb{G}$ is an \textbf{equitable goal} distribution if it is balanced and satisfies score parity (for some fixed score $s$).
\end{defn}
So, intuitively, we propose the \textit{goal} in equitable dialogue is a contextualized dialogue distribution which is itself equitable, according to our formal definition of this property -- i.e., score parity. Furthermore, it should be \textit{balanced} to prioritize the protected attribute equally during evaluation.
As we'll see later, choosing the test $h$ to be the scoring function $s$ from our previous definition allows us to use $\mathbf{TD}$ (with an equitable goal) to control the parity gap of our learned dialogue.
\paragraph{Biased Data} While the formal definition above (Def.~\ref{defn:eq_dist}) is about equity, it should also be noted that we implicitly arrive at a formal definition for \textbf{bias}: \textit{the absence of equity}. In particular, a contextualized dialogue distribution (dataset) is \textbf{biased} if it is not equitable. Note, this also distinguishes biased data from other common concepts like \textit{noisy} data because we use an expectation to quantify parity; i.e., which is immune to non-systemic noise.  
\paragraph{Small Test Divergence Implies Equity}
\begin{theorem}
\label{thm:sp_reduction}
Consider an equitable goal $\mathbb{G}$ and let $h \equiv s$ (the scoring function). Then, $\Delta(\hat{\mathbb{G}}_\theta) \leq \epsilon$ whenever $\mathbf{TD}_\mathbb{G}(\theta) \leq \epsilon / 2$.
\end{theorem}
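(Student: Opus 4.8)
The plan is to bound the parity gap of the learned distribution $\hat{\mathbb{G}}_\theta$ by inserting the corresponding goal quantities and then exploiting the three defining features of the setup: score parity of the goal (which makes the goal's own gap zero), balance (which supplies the factor of $2$), and the hypothesis $h \equiv s$ (which makes the test divergence literally a difference of the scores that enter the parity gap, so the unknown effect $U$ plays no role). The crucial structural point to keep in mind throughout is that $\mathbf{TD}$ couples $D$ and $\hat{D}$ through a \emph{shared} context $C$, and hence a shared attribute $A \sim \mathbb{A}(C)$; this is what makes conditioning on $A = a$ legitimate on both sides of the score difference simultaneously.

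First I would fix notation. Write $g_a = \mathbf{E}[s(D,a)\mid A=a]$ and $\hat g_a = \mathbf{E}[s(\hat D,a)\mid A=a]$ for $a\in\{0,1\}$. Score parity of the equitable goal $\mathbb{G}$ asserts $g_0 = g_1$, while the target quantity is $\Delta(\hat{\mathbb{G}}_\theta) = \lvert \hat g_0 - \hat g_1\rvert$. Adding and subtracting the goal scores and applying the triangle inequality gives
\[
\lvert \hat g_0 - \hat g_1\rvert \le \lvert \hat g_0 - g_0\rvert + \lvert g_0 - g_1\rvert + \lvert g_1 - \hat g_1\rvert,
\]
where score parity annihilates the middle term. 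Each surviving term I would bound by moving the absolute value inside the conditional expectation (Jensen's inequality for $\lvert\cdot\rvert$):
\[
\lvert \hat g_a - g_a\rvert = \bigl\lvert \mathbf{E}[\,s(\hat D,a) - s(D,a)\mid A=a\,]\bigr\rvert \le \mathbf{E}\bigl[\,\lvert s(\hat D,a) - s(D,a)\rvert \mid A=a\,\bigr] =: T_a.
\]

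Finally I would connect $T_0 + T_1$ to the test divergence. Because $h\equiv s$, we have $\mathbf{TD}_\mathbb{G}(\theta) = \mathbf{E}[\lvert s(D,A)-s(\hat D,A)\rvert]$; conditioning on $A$ and invoking balance, $\mathbf{Pr}(A=0)=\mathbf{Pr}(A=1)=\tfrac12$, yields $\mathbf{TD}_\mathbb{G}(\theta) = \tfrac12\,(T_0+T_1)$. Chaining the inequalities then gives $\Delta(\hat{\mathbb{G}}_\theta) \le T_0 + T_1 = 2\,\mathbf{TD}_\mathbb{G}(\theta)$, so the assumption $\mathbf{TD}_\mathbb{G}(\theta)\le\epsilon/2$ forces $\Delta(\hat{\mathbb{G}}_\theta)\le\epsilon$, as claimed.

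I do not expect a genuine obstacle here: the argument is a short chain of triangle inequality, Jensen, and the two goal hypotheses. The only steps requiring care are bookkeeping ones—ensuring the \emph{same} $A$ is used for $D$ and $\hat D$ inside each conditional expectation (guaranteed by the shared context in the definition of $\mathbf{TD}$), and noting that dropping $U$ is justified precisely because $h\equiv s$ does not depend on $U$. It is worth emphasizing that the factor of $2$ is not slack from a crude estimate but a direct consequence of the balance hypothesis, which is exactly why balance is built into the definition of an equitable goal.
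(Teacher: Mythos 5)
Your proof is correct and follows essentially the same route as the paper's: condition the test divergence on $A$, use balance to extract the factor $\tfrac12$, apply Jensen's inequality to pass to differences of conditional expectations, and use score parity to identify the goal's two conditional scores with a common value. The only difference is cosmetic—where the paper verifies $\lvert \hat g_0 - x\rvert + \lvert x - \hat g_1\rvert \geq \Delta(\hat{\mathbb{G}}_\theta)$ by a four-way case analysis on the sign patterns, you invoke the triangle inequality directly, which is cleaner but not a different argument.
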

Simply, the above result indicates minimization of $\mathbf{TD}$ with an equitable goal and appropriate test leads to an equitable learned dialogue distribution. 
\paragraph{Takeaways} An important consequence of Thm. \ref{thm:sp_reduction}
is the ability to confidently use algorithms designed in the \texttt{LEATHER} framework (i.e., to reduce test divergence) for equitable dialogue learning. While these algorithms may have originally been designed to learn human-like dialogue, they can easily be modified to learn equitable dialogue. In particular, we need only change the goal from any human dialogue distribution to any equitable dialogue distribution -- as in Def.~\ref{defn:eq_dist}. Portability of algorithms in the sense described means, ultimately, a unified theory for dialogue generation. For any algorithm we propose, we may conduct a singular theoretical analysis of test divergence that can serve multiple purposes -- both human-like and equitable dialogue generation. In other words:
\begin{displayquote}
\textit{\texttt{LEATHER}-based algorithms for human-likeness can be used to learn equitable text by simply augmenting training data.}
\end{displayquote}
Some standard examples of how to create the new equitable goal $\mathbb{G}$ include augmenting data in the dataset to achieve equitable constraints \citep{zhao-etal-2018-gender, park-etal-2018-reducing}. The takeaway from our theorem above agrees with existing empirical study: we can typically expect these strategies to be effective. Still, as we see next, there are other effective alternatives (under the right assumptions). 
\subsection{Learning to be Equitable \textit{and} Human-like}
\label{sec:theory_bound}
Next, we study the circumstances under which the goals of human-like dialogue learning and equitable dialogue learning align. That is, we study circumstances under which an algorithm designed to minimize $\mathbf{TD}$ can learn from (biased) human-like goal data and simultaneously learn to be equitable.
\paragraph{Context and Its Role (Assumptions)}
\begin{defn}
\label{defn:c-awar}
(Context-Awareness) Consider an equitable goal distribution $\mathbb{G}$. A contextualized dialogue distribution $\mathbb{H} \neq \mathbb{G}$ is \textbf{context-aware} if \footnote{We use the shorthand $\mathbf{Pr}(C\vert D) = \mathbf{Pr}(\tilde{C} \vert \tilde{D})$ to mean: $\mathbf{Pr}(C = c\vert D = d) = \mathbf{Pr}(\tilde{C} = c\vert \tilde{D} = d) \ \forall \ (c,d) \in \mathcal{C} \times \mathcal{D}$.}
\begin{equation}\small
\begin{split}
\mathbf{Pr}(D \vert C) = \mathbf{Pr}(\tilde{D} \vert \tilde{C}); \  (\tilde{C},\tilde{D}) \sim \mathbb{H}, \ \tilde{A} \sim \mathbb{A}(\tilde{C}).
\end{split}
\end{equation}
\end{defn}
\begin{defn}
\label{defn:c-pres}
(Context-Preservation) The distribution $\mathbb{H}$ \textbf{preserves context} if
\begin{equation}\small
\begin{split}
\mathbf{Pr}(C \vert A) = \mathbf{Pr}(\tilde{C} \vert \tilde{A}); \  (\tilde{C},\tilde{D}) \sim \mathbb{H}, \ \tilde{A} \sim \mathbb{A}(\tilde{C}).
\end{split}
\end{equation}
\end{defn}%
The definitions are based on the idea of \textit{label-shift} used to study data-shift at test time \citep{lipton2018detecting}. In this paper, we think of $\mathbb{H}$ as the possibly inequitable distribution of \textit{human} contextualized dialogues (determined by some corpus). So, these definitions can be viewed as assumptions of how inequity presents itself in human data. 

\textit{Context-awareness} assumes that humans are not biased \textit{provided the background context $C$.} Conceptually, this is reasonable, since humans use context to form inferences about attributes of other human subjects (even protected attributes). If background is sufficient, human inferences will often be correct inferences and the dialogue should be equitable with respect to accuracy parity, at least.\footnote{Perfectly correct dialogue satisfies accuracy parity because it satisfies $s \equiv 0$ in Eq.~\eqref{eqn:ap_choice}, regardless of $A$.} Instead, bias in the considered corpus must arise from aggregate disproportions of attributes (see \S~\ref{sec:intro}). 

\textit{Context-preservation} assumes that the presentation of the context for attributes does not change. In other words, the features of the protected attribute which present themselves through the context should be invariant across $\mathbb{G}$ and $\mathbb{H}$. For example, if one attempts to infer race from an image, this assumption simply states the visual features indicative of race should be consistent. The assumption would be violated, for example, if $\mathbb{G}$ protects Asian males and $\mathbb{H}$ protects Asian females.
\paragraph{Test Divergence Learning Bound}
In this part, for simplicity, we assume the parameters $\theta$ are learned from a \textit{finite} space $\Theta$. Other proof techniques may allow arbitrary $\Theta$; e.g., \citet{maurer2004note}.
\begin{theorem}\label{thm:learning_bound}
Consider an equitable goal $\mathbb{G}$ with associated test $h$. Suppose a sample of i.i.d. human data is collected $\mathbb{S} = (\tilde{C}_i,\tilde{D}_i)_{i=1}^{m}$; $(\tilde{C}_i, \tilde{D}_i) \sim \mathbb{H}$. Suppose $\mathbb{H}$ is context aware and preserves context. Then, for all $\delta > 0$, with probability at least $1-\delta$, for all $\theta$, $2\beta \times \mathbf{TD}_\mathbb{G}(\theta)$ is bounded above by
\begin{equation}\small
\label{eqn:bound}
    \frac{1}{m}\sum_{i=1}^m \lvert \underbrace{h(\tilde{D}_i, \tilde{A}_i)}_{\text{human}} - \underbrace{h(\hat{D}'_i, \tilde{A}_i)}_{\text{predicted}} \rvert + \underbrace{\sqrt{\tfrac{\log \lvert \Theta \rvert + \ln 2 / \delta}{2m}}}_\text{data efficiency}
\end{equation}
where $\beta = \min_a \mathbf{Pr}(\tilde{A} = a)$.\footnote{Note, we also pose a technical requirement: pairwise independence must hold (conditional to the context) between the human dialogue, the predicted dialogue, and the protected attribute. This is not an overly strong assumption; see Appendix~\ref{sec:pairwise_ind} for a detailed discussion with examples.} 
\end{theorem}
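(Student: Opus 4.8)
The plan is to pass from the balanced, equitable goal $\mathbb{G}$ to the biased human distribution $\mathbb{H}$ by isolating everything the two distributions share: conditional on a fixed context $c$, the laws of the human dialogue, the model prediction, and the protected attribute all agree across $\mathbb{G}$ and $\mathbb{H}$, so the only discrepancy between the two test divergences is the \emph{marginal} law of the context. I would therefore (i) define a per-context expected divergence and argue it is a single, distribution-agnostic function of $c$; (ii) decompose its expectation over the attribute and match the two context marginals using context-preservation and balancedness, which is exactly where the factor $2\beta$ enters; and (iii) close the gap to the empirical quantity with a standard uniform-concentration argument over the finite class $\Theta$.

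For step (i), set $\phi(c) = \mathbf{E}[\,\lvert h(D,A) - h(\hat D, A)\rvert \mid C = c\,]$, where $A \sim \mathbb{A}(c)$, $\hat D \sim \mathbb{P}_\theta(c)$, and $D$ is drawn from the context-conditional of the goal. Conditional on $C=c$: the law of $\hat D$ is $\mathbb{P}_\theta(c)$ by definition of the learned kernel; the law of $A$ is $\mathbb{A}(c)$ by definition of the attribute; and the law of $D$ is identical under $\mathbb{G}$ and $\mathbb{H}$ by context-awareness ($\mathbf{Pr}(D\mid C)=\mathbf{Pr}(\tilde D\mid \tilde C)$). Using the conditional (pairwise) independence of $D$, $\hat D$, and $A$ given $C$ from the footnote --- concretely, expanding $\lvert h(D,A)-h(\hat D,A)\rvert = \sum_a \mathbf{1}[A=a]\,\lvert h(D,a)-h(\hat D,a)\rvert$ and factoring the attribute out of the pair $(D,\hat D)$ --- I can write $\phi(c) = \sum_a \mathbf{Pr}(A=a\mid C=c)\,\mathbf{E}[\lvert h(D,a)-h(\hat D,a)\rvert \mid C=c]$, which depends only on the shared context-conditional laws and is therefore the \emph{same} function of $c$ under both $\mathbb{G}$ and $\mathbb{H}$. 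Consequently $\mathbf{TD}_\mathbb{G}(\theta) = \mathbf{E}_{C\sim\mathbb{G}}[\phi(C)]$ and $\mathbf{E}_{\mathbb{H}}[\lvert h(\tilde D,\tilde A)-h(\hat D',\tilde A)\rvert] = \mathbf{E}_{\tilde C\sim\mathbb{H}}[\phi(\tilde C)]$.

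For step (ii), condition each expectation on the attribute via the law of total expectation, $\mathbf{E}[\phi(C)] = \sum_a \mathbf{Pr}(A=a)\,\mathbf{E}[\phi(C)\mid A=a]$. Context-preservation ($\mathbf{Pr}(C\mid A)=\mathbf{Pr}(\tilde C\mid \tilde A)$) makes the inner conditional expectations coincide across the two distributions, so writing $\Phi_a := \mathbf{E}[\phi(C)\mid A=a]$ is unambiguous, and $\Phi_a \ge 0$. Balancedness gives $\mathbf{Pr}_\mathbb{G}(A=a)=1/2$, hence $\mathbf{TD}_\mathbb{G}(\theta)=\tfrac12(\Phi_0+\Phi_1)$, while $\mathbf{Pr}_\mathbb{H}(\tilde A=a)\ge\beta$ yields $\mathbf{E}_{\tilde C\sim\mathbb{H}}[\phi(\tilde C)] = \sum_a \mathbf{Pr}_\mathbb{H}(\tilde A=a)\,\Phi_a \ge \beta(\Phi_0+\Phi_1) = 2\beta\,\mathbf{TD}_\mathbb{G}(\theta)$. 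This is the core inequality $2\beta\,\mathbf{TD}_\mathbb{G}(\theta)\le \mathbf{E}_\mathbb{H}[\,\lvert h(\tilde D,\tilde A)-h(\hat D',\tilde A)\rvert\,]$.

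Finally, for step (iii), for each fixed $\theta$ the units $(\tilde C_i,\tilde D_i,\tilde A_i,\hat D'_i)$ are i.i.d. and the summands $\lvert h(\tilde D_i,\tilde A_i)-h(\hat D'_i,\tilde A_i)\rvert$ are $[0,1]$-valued, so a two-sided Hoeffding bound together with a union bound over the finite $\Theta$ controls the population $\mathbb{H}$-expectation by the empirical average plus $\sqrt{(\log\lvert\Theta\rvert+\ln 2/\delta)/2m}$, uniformly in $\theta$ with probability at least $1-\delta$; chaining this with the core inequality gives the stated bound. I expect step (i) to be the main obstacle: the integrand couples $D$, $\hat D$, and $A$ through the shared attribute and the absolute difference, so concluding that $\phi$ is genuinely distribution-independent requires the conditional-independence structure to factor the full joint law, and one must verify that matching the three context-conditional \emph{marginals} indeed suffices to match the joint conditional law entering $\phi$ --- this is precisely the technical role of the pairwise-independence hypothesis. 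Steps (ii) and (iii) are then routine algebra and textbook concentration.
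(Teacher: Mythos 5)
Your proof is correct and follows essentially the same route as the paper's: both establish the core inequality $2\beta\,\mathbf{TD}_\mathbb{G}(\theta) \le \mathbf{TD}_\mathbb{H}(\theta)$ by conditioning on context, factoring the joint law via the conditional-independence assumption, swapping in the human conditionals via context-awareness (and the shared kernels $\mathbb{P}_\theta$, $\mathbb{A}$), then trading the balanced goal attribute marginal ($1/2$) against the human one ($\ge \beta$) via context-preservation, before closing with Hoeffding plus a union bound over the finite $\Theta$. The only difference is bookkeeping: you carry the argument at the level of the per-context expectation $\phi(c)$ and attribute-conditional averages $\Phi_a$, whereas the paper manipulates joint densities and bounds the ratio $\mathbf{Pr}(C=c)/\mathbf{Pr}(\tilde{C}=c) \le 1/(2\beta)$ pointwise --- mathematically the same step, since the paper's density bound is itself derived from exactly the attribute decomposition you use.
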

For interpretation, we break down the upperbound on $2\beta \times \mathbf{TD}_\mathbb{G}(\theta)$ into two terms: (a) the difference in test output from the \textit{human} dialogue to the \textit{predicted} dialogue and (b) a \textit{data efficiency} term dependent on the number of i.i.d samples $m$. 
\paragraph{Equity from Biased Data}
Notice, the \textit{predicted} dialogue in (a) is dependent on the human dialogue's context $\tilde{C}_i$ -- not the goal dialogue's context $C$ -- so (a) is actually identical in definition to $\mathbf{TD}_\mathbb{S}$, an empirical observation of $\mathbf{TD}_\mathbb{H}$. That is, (a) is test divergence computed on a human corpus as was done by \citet{leather}. Since (a) uses a human dialogue corpus to define its goal, Eq.~\eqref{eqn:bound} implies that learning human-like dialogue (via \texttt{LEATHER}) can also optimize the equity of the dialogue by reducing an upperbound on the equitable goal $\mathbf{TD}_\mathbb{G}$. This is true even if the goal human data is biased. In other words:
\begin{displayquote}
\textit{\texttt{LEATHER}-based algorithms learn human-likeness \textbf{and} equity, even on biased data.}
\end{displayquote}
We only require the human data to be context-aware and preserve context (Defs.~\ref{defn:c-awar} and \ref{defn:c-pres}).
\paragraph{Data Efficiency} The above interpretation of (a) is only valid if the \textit{data efficiency} term (b) is also small. For interpretation, we consider the size of the parameter space $\Theta$ fixed and focus on the number of i.i.d training samples $m$. As $m$ increases, (b) ultimately goes to 0 and the effect of (a) dominates the bound. In some cases though, if $m$ is too small (b) can also have an impact. For example, this may be the case when using data-augmentation strategies to create a more equitable distribution. In particular, augmentation reduces the number of i.i.d. data points by creating dependencies in the data, which can reduce the data-efficiency of learning algorithms \citep{ralaivola2010chromatic}. That is, augmentation can increase the size of (b) in learning bounds on test divergence,\footnote{For discussion, see the pf. of Thm.~\ref{thm:learning_bound} and remarks.} or in other words:
\begin{displayquote}
\textit{Augmenting training data to improve equity can reduce data-efficiency, and ultimately, model performance.}
\end{displayquote}
Impact does depend on the augmentation strategy, so we study common proposals for equity, next.
\begin{table*}[t]
    \centering\small
    \begin{tabular}{c||c|c|c|c||c|c|c|c||c}
         &  $\mathbf{acc}$ $\uparrow$
         &  $\mathbf{l div}$ $\uparrow$ & $\mathbf{q div}$ $\uparrow$ & $\mathbf{rep q}$ & 
         $\Delta$ ($\mathbf{F}$) & $\mathbf{TD}$ ($\mathbf{F}$) &
         $\Delta$ ($\mathbf{M}$) & $\mathbf{TD}$ ($\mathbf{M})$ & $\mathbf{hum. eval.}$ ($\mathbf{F}$/$\mathbf{M}$) $\uparrow$ \\ \hline 
         \texttt{CL} & 55.9 & 10.7 & 14.3 & 58.2 
         & 52.6 & 28.8 & 23.7 & 33.5 & 52.0 / 72.0 \\
         \texttt{LEATHER} & 56.9 & 12.7 & 16.0 & 47.5 
         & 29.1 & 27.2 & 14.7 & 29.7 & 68.0 / 64.0 \\
         \texttt{DS} & 58.0 & 12.2 & 14.8 & 43.8 
         & 35.8 & 28.9 & 2.3 & 30.7 & 66.0 / 66.0
    \end{tabular}
    \caption{ \small Comparison of algorithms after 100 epochs of pre-training and 100 epochs of \textit{self-play}. Generally, objective is 0 on 100 point scale with exceptions denoted by up arrows. The first 4 metrics test human-likeness. The last 5 test equity.}
    \label{tab:results}
\end{table*}%

\section{Experiments}
\label{sec:experiments}
In Section~\ref{sec:theory}, we conclude by outlining algorithmic insights revealed by our theory. Next, we test these theories on the \textit{GuessWhat?!} game corpus. 
\subsection{Dataset, Algorithms, and Evaluation}
Unless otherwise noted, we use identical experimental settings, hyperparameters, etc. as \citet{shekhar-etal-2019-beyond, leather}. 
\paragraph{Dataset}
Our dataset is the corpus for the \textit{GuessWhat?!} game proposed by \citet{de2017guesswhat}. Gameplay is described in Figure~\ref{fig:problem} and an example is shown as the visual dialogue in Figure~\ref{fig:examples}. We also give a detailed description of the game rules in Appendix~\ref{sec:guesswhat}. We use the original train/val. splits and provide statistics on this corpus in Appendix~\ref{sec:guesswhat}. For training, unless otherwise noted, we use the full train set and report 1 seed. We focus on modelling the \textit{question-player} and use an automated answer-player trained on human data.
\paragraph{Protected Attribute} For these experiments, we use gender (male and female) as the protected attribute. When the protected attribute is female gender $(\mathbf{F})$, we set $a=1$ as long as all human dialogues use at least one female-gendered word.\footnote{\{she, woman, her, hers, gal, girl, women, gals, girls\}} When the protected attribute is male gender $(\mathbf{M})$, we set $a=1$ as long as all human dialogues use at least one male-gendered word.\footnote{\{he, man, him, his, guy, boy, men, guys, boys\}} Conceptually, this labeling scheme uses human annotator consensus to determine when it is appropriate or inappropriate to ask gender-specific questions: if $a=1$, \textit{all} human annotators perceive the protected gender to be present in the image and relevant to gameplay. Importantly, the labeling scheme also implies that the human dialogue satisfies our assumptions in \S~\ref{sec:theory_bound}: \textit{context awareness} (Def.~\ref{defn:c-awar}) and \textit{context preservation} (Def.~\ref{defn:c-pres}); i.e., as shown in Appendix~\ref{sec:label_scheme}. Different conceptualizations of how the protected attribute should be defined are possible, but we focus on this scheme because it allows us to simulate the assumptions of our theory in \S~\ref{sec:theory_bound}, and therefore, best test our theory in practice. As a final note, while we focus on male/female gender in these experiments, using more than two categories for protected attributes is also possible. Simply, one checks the parity gap for each new protected attribute to be added. This would allow our theoretical and empirical study to be extended to general multi-category attributes; e.g., race or religion. 
\paragraph{\texttt{CL} Algorithm} \texttt{CL} is a cooperative learning algorithm proposed by \citet{shekhar-etal-2019-beyond} to model the question-player. The algorithm is based primarily on a \textit{self-play} learning phase \citep{das2017learning} which learns from machine-machine dialogue. This is used in addition to (after) a more traditional supervised learning phase (i.e., on human-human dialogue). See  Appendix~\ref{sec:cl} for details.
\paragraph{\texttt{LEATHER} Algorithm} An extension of \texttt{CL} proposed by \citet{leather} with the purpose of better optimizing test divergence during the self-play learning process. Through some theoretical analyses, ultimately, the authors propose to regularize the \textit{self-play} phase by re-incorporating human-human data from the supervised phase.
\paragraph{\texttt{DS} Algorithm} A modification of the \texttt{LEATHER} algorithm. While re-incorporating human data, an augmentation (downsampling) strategy is used to balance occurrence of protected attributes; i.e., like other strategies for equity \citep{zhao-etal-2018-gender, park-etal-2018-reducing}. See Appendix~\ref{sec:downsampling} for details.
\paragraph{Human-Likeness Evaluation} To evaluate human likeness, we use metrics proposed by \citet{shekhar-etal-2019-beyond}: average accuracy $\mathbf{acc}$ in identifying the true goal-object across three random seeds, average lexical diversity ($\mathbf{ldiv}$; type/token ratio over all dialogues), average question diversity ($\mathbf{q div}$; \% unique questions over all dialogues), and average percent of dialogues with repeated questions ($\mathbf{rep q}$). We report these on the full test data.
\paragraph{Equity Evaluation} To evaluate equity, we focus on accuracy parity; i.e., score parity with scoring function described in Eq.~\eqref{eqn:ap_choice}.\footnote{We focus on accuracy parity because the dataset we consider is not likely to exhibit any significant parity issues in toxicity, sentiment, etc. Instead, the systemic biases in the data are most likely to impact accuracy parity.} To replicate evaluation against the goal distribution in Def.~\ref{defn:eq_dist}, we apply an augmentation strategy to the test set (similar to the \texttt{DS} algorithm; see Appendix~\ref{sec:downsampling}). Because our ground truth data is inferred from human annotators focused on game success, we also incorporate additional human annotations. $\mathbf{hum.eval.}$ is \% of model dialogues using gendered words correctly based on annotation (50 per method per annotator). Namely, two annotators\footnote{College educated, native English speakers.} were asked to determine correctness of gendered word use, evaluating both incorrect usage as well as false negatives; i.e., where use would be appropriate/helpful.\footnote{To prime responses, annotators were prompted with questions like ``If any gendered words were used, were they used correctly?'' as well as ``If a gendered word was not used, would it have been helpful to use one to complete the task?''.} 
\subsection{Results}
\label{sec:results}
\paragraph{\texttt{LEATHER} produces human-like, equitable text.} In Tab.~\ref{tab:results}, \texttt{LEATHER} improves upon \texttt{CL} in terms of both human-likeness \textit{and} equity, across all metrics. These observations validate our theoretical analyses. In particular, \texttt{LEATHER} (as the name implies) is designed based on the \texttt{LEATHER} framework to minimize test divergence. From previous work, we know this means it should improve human-likeness \citep{leather}. Now, from our current theoretical study (Thm.~\ref{thm:learning_bound}), we also hypothesize \texttt{LEATHER} can improve equity as long as certain assumptions are met (Def.~\ref{defn:c-awar}, \ref{defn:c-pres}). Since the dataset we study satisfies the specified assumptions, our theoretical expectation of \texttt{LEATHER} is the multi-faceted improvement we observe. That is, our theory predicts the empirical improvements in human-likeness and equity achieved by \texttt{LEATHER}. The ability of our theory to predict the impact of algorithm design choices is an important practical implication. We are also able to draw similar conclusions for \texttt{DS}, which we discuss next.
\paragraph{\texttt{DS} does not improve equity as well as \texttt{LEATHER}, but overall, its behavior aligns with our theoretical predictions.} 
Thm.~\ref{thm:learning_bound} also makes the observation that data-augmentation strategies like \texttt{DS} can sometimes perform \textit{worse} than alternatives which focus only on human-likeness (i.e., due to data-inefficiency). Since \texttt{DS} does augment data significantly, we might expect \texttt{DS} to perform worse than \texttt{LEATHER}, and ultimately, it does in Tab.~\ref{tab:results} (all metrics but $\Delta$ $\mathbf{M}$). With that said, another of our theoretical results (Thm.~\ref{thm:sp_reduction}) suggests data-augmented versions of \texttt{LEATHER} algorithms like \texttt{DS} can, in fact, improve equity, especially in more general cases where data does \textit{not} satisfy the circumstances of our experimental data. In experiments, this insight is reflected in comparing \texttt{DS} and the baseline. \texttt{DS} outperforms \texttt{CL} in Tab.~\ref{tab:results} on all metrics but $\mathbf{TD}$ $\mathbf{F}$.
\paragraph{Test divergence models equity well.} Finally, we recall test divergence is the key link between existing learning theoretic work and our analysis of equitable dialogue. In particular, we show, theoretically speaking, that $2\mathbf{TD}$ always bounds the parity gap $\Delta$, which measures equity. As a result, learning theory algorithms can implicitly learn to be fair in many cases. Indeed, empirical results in Tab.~\ref{tab:results} agree with this theoretical bound in every case, and further, suggest $\mathbf{TD}$ may be useful at ranking equity of algorithms, since $\mathbf{TD}$ is predictive of all improvements from $\texttt{CL}$ to $\texttt{LEATHER}$.  Again, our theoretical predictions match our empirical observations, highlighting the practical utilitiy of our theory.

\section{Conclusions}
\label{sec:conclusion}
In this paper, we provide a first in-depth study of equity in dialogue, formalizing mathematical notions of equity in dialogue and using computational learning theory to study how equity can be achieved through algorithm design. 
Our empirical results show how our formal theoretical study of equity in dialogue can be used, with great benefit, to select and design algorithms in a task-oriented dialogue setting. In particular, we can: design algorithms that achieve both equity and human-likeness, predict unexpected consequences of data-augmentation, and provide proxy statistics that are useful in ranking the equity of algorithms. To promote further research, our code, data, and a python package will be made publicly available.\footnote{\href{https://github.com/anthonysicilia/equitable-dialogue-ACL2023}{ https://github.com/anthonysicilia/equitable-dialogue-ACL2023}}
\section*{Acknowledgements}
The authors thank Amazon for their support during this project.
\section*{Limitations}
While our theoretical work is broadly applicable to any protected attribute and any dialogue task, our empirical study has primarily tested gender bias on the \textit{GuessWhat?!} task. Continued experimental study on a wider range of protected attributes and tasks can better support our mathematical findings. Also, users of our theory should verify the assumptions of our theory when using it to draw insights on new datasets. Specifically, as the type of data bias changes, it is possible the assumptions of Thm.~\ref{thm:learning_bound} may no longer be met. Users of our theory should take care in ensuring context-awareness and context-preservation, for example, are reasonable assumptions on new data, prior to applying the insights of \S~\ref{sec:theory_bound}. Lastly, while all of our gender annotations come from human annotators, only a smaller subset come from annotators primed to judge correctness/equity of gender reference. So, more in-depth human evaluation can better support our theoretical results as well.
\section*{Ethics Statement}
The goal of this paper is to present a theoretically grounded framework to mitigate bias in dialogue systems. Our theoretical and empirical techniques can lead to important insights/solutions for algorithm design that reduce bias, along with any unintended harm associated with this bias. With this said, some of the proposed algorithms rely on pretrained models such as word or image embeddings, and any harm or bias associated with these models can still be present after efforts to mitigate. Thus, models trained with these techniques should still undergo rigorous human evaluation for presence of biases before being deployed.

Our human subject board approved our protocol. Human subjects participated voluntarily and were compensated according to the regulations approved by our human subject review board.

\bibliography{anthology,custom}
\bibliographystyle{acl_natbib}
\clearpage
\onecolumn
\appendix
\section{Proofs and Additional Technical Discussion}
\label{sec:proofs}
\subsection{Proof of Thm.~\ref{thm:sp_reduction}}
\begin{claim}
Consider an equitable goal $\mathbb{G}$ and let $h \equiv s$ (the scoring function). Then, $\Delta(\hat{\mathbb{G}}_\theta) \leq \epsilon$ whenever $\mathbf{TD}_\mathbb{G}(\theta) \leq \epsilon / 2$.
\end{claim}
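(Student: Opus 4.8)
The plan is to bound $\Delta(\hat{\mathbb{G}}_\theta)$ directly by $2\,\mathbf{TD}_\mathbb{G}(\theta)$, after which the claim follows immediately upon substituting the hypothesis $\mathbf{TD}_\mathbb{G}(\theta) \le \epsilon/2$. The structural fact that makes this work is that the goal context $C$ is \emph{shared} by the goal dialogue $D$, the learned dialogue $\hat{D}$, and the protected attribute $A$: since $(C,D)\sim\mathbb{G}$, $\hat{D}\sim\mathbb{P}_\theta(C)$, and $A\sim\mathbb{A}(C)$, all three are coupled through the same $C$. This common coupling is what lets me compare $s(D,A)$ and $s(\hat{D},A)$ under identical conditioning on $A$, which is exactly the form appearing in $\Delta(\hat{\mathbb{G}}_\theta)$.

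First I would expand $\Delta(\hat{\mathbb{G}}_\theta) = \lvert \mathbf{E}[s(\hat{D},0)\mid A=0] - \mathbf{E}[s(\hat{D},1)\mid A=1]\rvert$ and insert the two goal quantities $\mathbf{E}[s(D,0)\mid A=0]$ and $\mathbf{E}[s(D,1)\mid A=1]$. Because $\mathbb{G}$ is an equitable goal it satisfies score parity, so these two goal quantities are equal, i.e.\ $\Delta(\mathbb{G})=0$, and their difference cancels. Adding and subtracting them and applying the triangle inequality yields $\Delta(\hat{\mathbb{G}}_\theta) \le \lvert\mathbf{E}[s(\hat{D},0)-s(D,0)\mid A=0]\rvert + \lvert\mathbf{E}[s(\hat{D},1)-s(D,1)\mid A=1]\rvert$. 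Pulling each absolute value inside its expectation (by convexity / monotonicity of expectation) upper-bounds this by $\mathbf{E}[\lvert s(D,0)-s(\hat{D},0)\rvert\mid A=0] + \mathbf{E}[\lvert s(D,1)-s(\hat{D},1)\rvert\mid A=1]$, a sum of two conditional deviation terms.

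Next I would relate this sum to $\mathbf{TD}_\mathbb{G}(\theta) = \mathbf{E}[\lvert s(D,A)-s(\hat{D},A)\rvert]$. Since $s(D,A)=s(D,a)$ on the event $\{A=a\}$, the law of total expectation writes the test divergence as a convex combination of the two conditional terms above, with weights $\mathbf{Pr}(A=0)$ and $\mathbf{Pr}(A=1)$. Here the \emph{balanced} hypothesis on the equitable goal $\mathbb{G}$ enters decisively: both weights equal $1/2$, so the convex combination becomes a plain average and the sum of the two conditional expectations equals exactly $2\,\mathbf{TD}_\mathbb{G}(\theta)$. Chaining this with the previous step gives $\Delta(\hat{\mathbb{G}}_\theta)\le 2\,\mathbf{TD}_\mathbb{G}(\theta)$, and the hypothesis $\mathbf{TD}_\mathbb{G}(\theta)\le\epsilon/2$ closes the argument.

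Rather than a deep obstacle, the main thing to get right is the bookkeeping of the shared-context coupling, so that conditioning on $A$ is consistent across the goal and learned dialogues, together with the observation that balancedness is \emph{precisely} what turns the convex combination into an equal-weight average and thereby produces the clean factor of two. The score-parity assumption handles the cancellation of the goal terms, balancedness handles the conversion to test divergence, and everything else is the triangle inequality and Jensen's inequality for the absolute value.
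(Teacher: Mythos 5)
Your proof is correct and follows essentially the same route as the paper's: score parity supplies a single common goal value $x$, balancedness turns the law-of-total-expectation decomposition into an equal-weight average (producing the factor of two), and Jensen's inequality bridges the conditional means and the test divergence. The only cosmetic difference is that you run the chain in the opposite direction and invoke the triangle inequality $\lvert u_0 - u_1\rvert \le \lvert u_0 - x\rvert + \lvert x - u_1\rvert$ directly, where the paper establishes the same step by an explicit four-case analysis.
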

\begin{proof}
Suppose $\mathbf{TD}_\mathbb{G}(\theta) \leq \epsilon$, then we have
\begin{equation}
\begin{split}
\epsilon & \geq \mathbf{E} \big[ \big\lvert s(D, A) - s(\hat{D}, A)\big\rvert \big] \\
& = \sum_{a \in \mathcal{A}} \mathbf{Pr}(A=a) \cdot \mathbf{E}[ \big\lvert s(D, A) - s(\hat{D}, A) \big  \rvert \mid A=a] \quad \text{(Law of Total Expectation)} \\
& = \frac{1}{2} \sum_{a \in \mathcal{A}} \mathbf{E}[ \big\lvert s(D, A) - s(\hat{D}, A) \big  \rvert \mid A=a] \qquad \text{(Balance of }\mathbb{G}) \\
& \geq \frac{1}{2} \sum_{a \in \mathcal{A}} \big\lvert \mathbf{E}[ s(D, A) - s(\hat{D}, A) \mid A=a] \big  \rvert \qquad \text {(Jensen's Inequality)} \\
\end{split}
\end{equation}
Now, since $\mathbb{G}$ is equitable we have there is some value $x$ such that for all $a \in \mathcal{A}$, we have $\mathbf{E}[s(D, A) \mid A=a] = x$. Substituting and expanding the sum over $\mathcal{A}$, we have
\begin{equation}
\sum_{a \in \mathcal{A}} \big\lvert \mathbf{E}[ s(D, A) - s(\hat{D}, A) \mid A=a] \big  \rvert = \big\lvert x - \mathbf{E}[s(\hat{D}, 0)] \big \rvert + \big\lvert x - \mathbf{E}[s(\hat{D}, 1)] \big \rvert.
\end{equation}
Next, we put together the previous two equations and utilize the definition of the absolute value to break the proof into cases. For ease of presentation, we let
\begin{equation}
    \mu = \min \{\mathbf{E}[s(\hat{D}, 0)], \mathbf{E}[s(\hat{D}, 1)] \}\quad\text{and}\quad M = \max \{\mathbf{E}[s(\hat{D}, 0)], \mathbf{E}[s(\hat{D}, 1)]\}.
\end{equation}
 This gives
\begin{equation}
    2\epsilon \geq \begin{cases}
     \mathbf{E}[s(\hat{D}, 0)] - x +  \mathbf{E}[s(\hat{D}, 1)] - x & \quad\text{if}\quad \mu \geq x, \\
     x - \mathbf{E}[s(\hat{D}, 0)] + x - \mathbf{E}[s(\hat{D}, 0)] & \quad\text{if}\quad M \leq x, \\
     \mathbf{E}[s(\hat{D}, 0)] - x + x - \mathbf{E}[s(\hat{D}, 1)] & \quad\text{if}\quad \mathbf{E}[s(\hat{D}, 0)] \geq x \geq \mathbf{E}[s(\hat{D}, 1)], \\
     x - \mathbf{E}[s(\hat{D}, 0)] + \mathbf{E}[s(\hat{D}, 1)] - x& \quad\text{if}\quad \mathbf{E}[s(\hat{D}, 1)] \geq x \geq \mathbf{E}[s(\hat{D}, 0)].
    \end{cases}
\end{equation}
In the last two cases, occurrences of $x$ cancel out and we have precisely $2 \epsilon \geq \Delta(\hat{\mathbb{G}})$, precisely. Then, in the first case, we have
\begin{equation}
    \mathbf{E}[s(\hat{D}, 0)] - x +  \mathbf{E}[s(\hat{D}, 1)] - x \geq \mathbf{E}[s(\hat{D}, 0)] - \mu + \mathbf{E}[s(\hat{D}, 1)] - \mu = M - \mu.
\end{equation}
In the second case, we also have
\begin{equation}
    x - \mathbf{E}[s(\hat{D}, 0)] + x - \mathbf{E}[s(\hat{D}, 0)] \geq M - \mathbf{E}[s(\hat{D}, 0)] + M - \mathbf{E}[s(\hat{D}, 1)] = M - \mu.
\end{equation}
Thus, in all cases, we have $2\epsilon \geq \Delta(\hat{\mathbb{G}})$, the desired result.
\end{proof}
\subsection{Proof of Thm.~\ref{thm:learning_bound}}
\subsubsection{Proof}
\begin{claim}
Consider an equitable goal $\mathbb{G}$ with associated test $h$. Suppose a sample of i.i.d. human data is collected $\mathbb{S} = (\tilde{C}_i,\tilde{D}_i)_{i=1}^{m}$; $(\tilde{C}_i, \tilde{D}_i) \sim \mathbb{H}$. Suppose $\mathbb{H}$ is context aware and preserves context. Then, for all $\delta > 0$, with probability at least $1-\delta$, for all $\theta$, $2\beta \times \mathbf{TD}_\mathbb{G}(\theta)$ is bounded above by
\begin{equation}
    \frac{1}{m}\sum_{i=1}^m \lvert \underbrace{h(\tilde{D}_i, \tilde{A}_i)}_{\text{human}} - \underbrace{h(\hat{D}'_i, \tilde{A}_i)}_{\text{predicted}} \rvert + \underbrace{\sqrt{\tfrac{\log \lvert \Theta \rvert + \ln 2 / \delta}{2m}}}_\text{data efficiency}
\end{equation}
where $\beta = \min_a \mathbf{Pr}(\tilde{A} = a)$, $\hat{D}'_i \sim \mathbb{P}_\theta(\tilde{C})$. As noted in the main text we also pose the requirement of pairwise independence: first, between $D$, $\hat{D}$, and $A$ in the definition of $\mathbf{TD}_\mathbb{G}$ (conditional to $C$); second, between $\tilde{D}_i$, $\hat{D}'_i$, and $\tilde{A}_i$ (again, conditional to the context $\tilde{C}_i$). 
\end{claim}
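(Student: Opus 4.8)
The plan is to split the bound into two independent pieces: a \emph{distribution-shift} inequality that transfers the equitable-goal divergence $\mathbf{TD}_\mathbb{G}(\theta)$ onto the human distribution $\mathbb{H}$, namely $2\beta\,\mathbf{TD}_\mathbb{G}(\theta) \le \mathbf{TD}_\mathbb{H}(\theta)$, and a standard uniform \emph{concentration} step that replaces $\mathbf{TD}_\mathbb{H}(\theta)$ with its empirical estimate (term (a)) plus the data-efficiency term (b). Because we take $h \equiv s$ with $s:\mathcal{D}\times\mathcal{A}\to[0,1]$, the unknown effect $U$ drops out of the test and every test value lies in $[0,1]$, which is exactly what both steps need.

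For the distribution-shift inequality, I would first isolate the per-context, per-attribute conditional divergence $\phi(c,a) = \mathbf{E}[\,|s(D,a)-s(\hat D,a)|\mid C=c\,]$, where $D$ and $\hat D$ are drawn independently given $C=c$. The key observation is that $\phi$ is the \emph{same} function under $\mathbb{G}$ and $\mathbb{H}$: context-awareness (Def.~\ref{defn:c-awar}) forces $\mathbf{Pr}(D\mid C)$ to agree across the two distributions, the learned kernel $\mathbb{P}_\theta(\cdot)$ is shared by construction, and $\mathbb{A}$ is shared, so the two conditionals defining $\phi$ coincide. Using the conditional-independence structure (the pairwise-independence requirement, which lets me pull $A$ out of the inner expectation and treat $D\perp\hat D$ given $C$), I would write $\mathbf{TD}_\mathbb{G}(\theta)=\mathbf{E}_{(C,A)\sim\mathbb{G}}[\phi(C,A)]$, then condition on $A$. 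Balance (Def.~\ref{defn:balanced}) replaces $\mathbf{Pr}_\mathbb{G}(A=a)$ by $\tfrac12$, and context-preservation (Def.~\ref{defn:c-pres}) rewrites the inner conditional expectation $\mathbf{E}[\phi(C,a)\mid A=a]$ using $\mathbf{Pr}_\mathbb{H}(\tilde C\mid\tilde A)$ in place of $\mathbf{Pr}_\mathbb{G}(C\mid A)$. Setting $\psi_a = \sum_c \mathbf{Pr}_\mathbb{H}(\tilde C=c\mid\tilde A=a)\,\phi(c,a)\ge 0$, this yields $\mathbf{TD}_\mathbb{G}(\theta)=\tfrac12\sum_a\psi_a$, whereas the same rewriting applied to $\mathbb{H}$ gives $\mathbf{TD}_\mathbb{H}(\theta)=\sum_a\mathbf{Pr}_\mathbb{H}(\tilde A=a)\,\psi_a$. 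Since each weight is at least $\beta$ and each $\psi_a$ is nonnegative, $\mathbf{TD}_\mathbb{H}(\theta)\ge\beta\sum_a\psi_a = 2\beta\,\mathbf{TD}_\mathbb{G}(\theta)$.

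For the concentration step, I would note that $\mathbf{TD}_\mathbb{H}(\theta)=\mathbf{E}[\,|h(\tilde D,\tilde A)-h(\hat D',\tilde A)|\,]$ is precisely the population mean of the i.i.d.\ summands appearing in term (a), each bounded in $[0,1]$. For a fixed $\theta$, a one-sided Hoeffding bound controls the deviation of the empirical mean from $\mathbf{TD}_\mathbb{H}(\theta)$; a union bound over the finite parameter space $\Theta$ then produces the $\sqrt{(\log|\Theta|+\ln 2/\delta)/2m}$ data-efficiency term and delivers the uniform ``for all $\theta$'' guarantee with probability at least $1-\delta$. Chaining this with the distribution-shift inequality gives the theorem.

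I expect the main obstacle to be the distribution-shift step, specifically the careful bookkeeping of which conditional each structural assumption controls: context-awareness pins down $\mathbf{Pr}(D\mid C)$, context-preservation pins down $\mathbf{Pr}(C\mid A)$, balance supplies the factor $\tfrac12$, and only their interaction (together with $\beta=\min_a\mathbf{Pr}(\tilde A=a)$ and the nonnegativity of $\psi_a$) produces the $2\beta$ constant. The most delicate point is justifying that $A$ can be factored out of the inner expectation so that $\phi$ is genuinely attribute-free and shared across $\mathbb{G}$ and $\mathbb{H}$; this is exactly where the conditional (pairwise) independence of $(D,\hat D)$ and $A$ given $C$ is needed, and it is the assumption I would state and verify most carefully, deferring the examples to the appendix. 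A final point worth flagging is that term (b) relies on the summands being i.i.d., so data augmentation that introduces dependence would inflate this term --- matching the paper's cautionary claim about augmentation and data-efficiency.
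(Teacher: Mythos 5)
Your proposal is correct and follows essentially the same route as the paper's proof: a change-of-measure step that transfers $\mathbf{TD}_\mathbb{G}$ to $\mathbf{TD}_\mathbb{H}$ at cost $1/(2\beta)$ (using context-awareness, the shared kernels, context-preservation, balance, and the conditional independence factorization in exactly the same roles), followed by a finite-class Hoeffding/union-bound concentration step, which is what the paper's invocation of the structural risk minimization theorem amounts to. The only difference is bookkeeping --- you compare attribute-conditional expectations $\psi_a$ while the paper bounds the joint density ratio pointwise via $\mathbf{Pr}(C=c)\le\tfrac{1}{2\beta}\mathbf{Pr}(\tilde C=c)$ --- which yields the identical intermediate inequality $2\beta\,\mathbf{TD}_\mathbb{G}(\theta)\le\mathbf{TD}_\mathbb{H}(\theta)$.
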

\begin{proof}
First, we enumerate some of the key assumptions for easy reference:  
\begin{itemize}
    \item \textbf{(A1)}: $\mathbb{H}$ is context aware
    \item \textbf{(A2)}: $\mathbb{H}$ is context preserving
    \item \textbf{(A3)}: $D$, $\hat{D}$, $A$ are independent conditional to $C$; and, $\tilde{D}_i$, $\hat{D}'_i$, $\tilde{A}_i$ are independent conditional $\tilde{C}_i$
    \item \textbf{(A4)}:\footnote{Here, we are using the same shorthand from the main text; e.g., in Def.~\ref{defn:c-awar}.} $\mathbf{Pr}(\hat{D} | C) = \mathbf{Pr}(\hat{D}^\prime | \tilde{C})$ since both probabilities represent identical sampling from $\mathbb{P}_\theta$
    \item \textbf{(A5)}: $\mathbf{Pr}(A | C) = \mathbf{Pr}(\tilde{A} | \tilde{C})$ since both probabilities represent identical sampling from $\mathbb{A}$
\end{itemize}
Now, we consider decomposing the joint probability density $\mathbf{Pr}(D=d, \hat{D}=\hat{d}, A=a)$, which importantly, is the joint density used to compute the expectation in $\mathbf{TD}_\mathbb{G}(\theta)$.\footnote{We ignore $U$ since it is unused in this paper. The proof would be more complicated, but similar had we included $U$.} To begin, we have
\begin{equation}\small
\begin{split}
& \mathbf{Pr}(D=d, \hat{D}=\hat{d}, A=a) = \sum_{c} \mathbf{Pr}(C=c) \mathbf{Pr}(D=d, \hat{D}=\hat{d}, A=a \mid C = c) \quad\text{(Law of Total Exp.)} \\
& = \sum_{c} \mathbf{Pr}(C=c) \mathbf{Pr}(D=d \mid C = c)\mathbf{Pr}(\hat{D}=\hat{d} \mid C = c)\mathbf{Pr}(A=a \mid C = c) \quad\text{(\textbf{A3})} \\
& = \sum_{c} \frac{\mathbf{Pr}(C=c)}{\mathbf{Pr}(\tilde{C}=c)} \mathbf{Pr}(\tilde{C}=c) \mathbf{Pr}(D=d \mid C = c)\mathbf{Pr}(\hat{D}=\hat{d} \mid C = c)\mathbf{Pr}(A=a \mid C = c) \quad(\times\text{1 trick)} \\
& = \sum_{c} \frac{\mathbf{Pr}(C=c)}{\mathbf{Pr}(\tilde{C}=c)} \mathbf{Pr}(\tilde{C}=c) \mathbf{Pr}(\tilde{D}=d \mid \tilde{C} = c)\mathbf{Pr}(\hat{D}=\hat{d} \mid C = c)\mathbf{Pr}(A=a \mid C = c)\qquad\text{(\textbf{A1})} \\
& = \sum_{c} \frac{\mathbf{Pr}(C=c)}{\mathbf{Pr}(\tilde{C}=c)} \mathbf{Pr}(\tilde{C}=c) \mathbf{Pr}(\tilde{D}=d \mid \tilde{C} = c)\mathbf{Pr}(\hat{D}^\prime=\hat{d} \mid \tilde{C} = c)\mathbf{Pr}(A=a \mid C = c)\qquad\text{(\textbf{A4})} \\
& = \sum_{c} \frac{\mathbf{Pr}(C=c)}{\mathbf{Pr}(\tilde{C}=c)} \mathbf{Pr}(\tilde{C}=c) \mathbf{Pr}(\tilde{D}=d \mid \tilde{C} = c)\mathbf{Pr}(\hat{D}^\prime=\hat{d} \mid \tilde{C} = c)\mathbf{Pr}(\tilde{A}=a \mid \tilde{C} = c)\qquad\text{(\textbf{A5})} \\
& = \sum_{c} \frac{\mathbf{Pr}(C=c)}{\mathbf{Pr}(\tilde{C}=c)} \mathbf{Pr}(\tilde{C}=c) \mathbf{Pr}(\tilde{D}=d, \hat{D}^\prime=\hat{d}, \tilde{A}=a \mid \tilde{C}=c) \quad\text{\textbf{(A3)}}
\end{split}
\end{equation}
Further,  we can relate the probability distributions for the contexts $C$ and $\tilde{C}$ through their implied attribute distributions via \textbf{(A2)}
\begin{equation}
\begin{split}
\mathbf{Pr}(C=c) & = \sum_a \mathbf{Pr}(C = c \mid A = a) \mathbf{Pr}(A = a)\quad\text{(Law of Total Exp.)} \\
& = \sum_a \mathbf{Pr}(\tilde{C} = c \mid \tilde{A} = a) \mathbf{Pr}(A = a)\quad\text{\textbf{(A2)}} \\
& = \sum_a \mathbf{Pr}(\tilde{C} = c \mid \tilde{A} = a) \mathbf{Pr}(\tilde{A} = a) \cdot \tfrac{\mathbf{Pr}(A = a)}{\mathbf{Pr}(\tilde{A} = a)}\quad(\times\text{1 trick)} \\
& \leq \sum_a \mathbf{Pr}(\tilde{C} = c \mid \tilde{A} = a) \mathbf{Pr}(\tilde{A} = a) \cdot \tfrac{1}{2\beta}\quad\text{(balance of }\mathbb{G}\text{ and def. of }\beta) \\
& = \tfrac{1}{2\beta} \mathbf{Pr}(\tilde{C}=c)
\end{split}
\end{equation}
Applying this to our previous outcome, we have
\begin{equation}
\begin{split}
& \sum_{c} \tfrac{\mathbf{Pr}(C=c)}{\mathbf{Pr}(\tilde{C}=c)}\mathbf{Pr}(\tilde{C}=c) \mathbf{Pr}(\tilde{D}=d, \hat{D}^\prime=\hat{d}, \tilde{A}=a \mid \tilde{C}=c)\\
& \leq \sum_{c} \tfrac{1}{2\beta}\mathbf{Pr}(\tilde{C}=c) \mathbf{Pr}(\tilde{D}=d, \hat{D}^\prime=\hat{d}, \tilde{A}=a \mid \tilde{C}=c) \\
& = \tfrac{1}{2\beta} \mathbf{Pr}(\tilde{D}=d, \hat{D}^\prime=\hat{d}, \tilde{A}=a)\qquad\text{(Law of Total Exp.)}.
\end{split}
\end{equation}
Notice, the new joint density $\mathbf{Pr}(\tilde{D}=d, \hat{D}^\prime=\hat{d}, \tilde{A}=a)$ can be used to compute the expectation in $\mathbf{TD}_\mathbb{H}$, while the previous joint density was used to compute the expectation in $\mathbf{TD}_\mathbb{G}$. Both expectations have everywhere non-negative variables. So, ultimately, the relation between the joint densities gives:
\begin{equation}\label{eqn:finale}
    \mathbf{TD}_\mathbb{G}(\theta) \leq  \tfrac{1}{2\beta} \mathbf{TD}_\mathbb{H}(\theta)
\end{equation}
To complete the proof, we need to bound the true test divergence on the human data $\mathbf{TD}_\mathbb{H}(\theta)$ with our observation $\mathbf{TD}_\mathbb{S}(\theta)$. To do so, without using a test set, we need to apply a PAC learning bound for parameters selected from a finite hypothesis space (i.e., so that the result holds for any $\theta$ learned from $\Theta$). We choose the structural risk minimization bound presented in \citet{shalev2014understanding} -- i.e., Thm.~7.7 -- and apply it to our context,\footnote{To apply the theorem, we define the prefix free description language for $\Theta$ by simply enumerating each parameter in $\Theta$ (arbitrary order) and then mapping each parameter to the binary expansion of its assigned numeral. The loss needs to be replaced with the test divergence as well, but with this replacement, the required uniform convergence property for each individual parameter is still given by Hoeffding’s Inequality, so the proof as a whole is unchanged beyond this simple substitution.} which gives the final result.
\end{proof}
\subsubsection{Remarks on Data Efficiency} Note, the last step of the proof can be applied directly to $\mathbf{TD}_\mathbb{G}(\theta)$ as well, or any other instance of the test divergence for that matter. In the main text, when we refer to the data-efficiency of augmentation strategies, it is important to note that these augmentation strategies can change the distribution over which we compute test divergence. Although this distribution and the resulting test divergence may change, the data-efficiency term will be effected equally.\footnote{Some strategies for measuring data-efficiency depend on the data -- our comment excludes these.} For example, consider downsampling -- a simple augmentation strategy used in the experiments. In this case, if one downsamples to achieve balance in the frequency of the protected attribute, the data efficiency term would change from $\sqrt{\tfrac{\log \lvert \Theta \rvert + \ln 2 / \delta}{2m}}$ to $\sqrt{\tfrac{\log \lvert \Theta \rvert + \ln 2 / \delta}{2\alpha m}}$, where $\alpha$ is fraction of data remaining after downsampling. In an ideal case, where there is only one protected attribute to consider during re-balancing, we have $\alpha = 2\beta$ and the data efficiency is reduced by a factor of $1 / \sqrt{2\beta}$, compared to no augmentation. The reader may notice $\texttt{LEATHER}$ based algorithms also experience a reduction in data-efficiency by the slightly larger factor of $1 / 2\beta$ applied to the whole bound; i.e., see Eq.~\eqref{eqn:finale}. With this said, the reason we allude to worse data-efficiency overall for augmentation strategies is that these strategies typically also re-use data to define the augmentation; e.g., in the mentioned case, where one downsamples for balance, an \textit{additional} data-efficiency term must be added to the bound to measure the impact of estimating $\beta$ from training data prior to conducting the downsampling.\footnote{If this added term is $\gamma$ times the original data-efficiency, the inflation in Eq.~\eqref{eqn:finale} actually becomes \textit{smaller} than the inflation caused by data augmentation, whenever $\beta > 1 / 2 \gamma^2$.} Additional reduction can also be induced from imperfect estimation of $\beta$, and furthermore, when there is more than one protected attribute to consider. In the latter case, we may need to reduce the effective dataset size $\alpha m$ further to simulate balance (as in the later experiments; see Appendix~\ref{sec:downsampling}). Thus, depending on the problem, these compounding effects can easily lead to reduced efficiency overall; i.e., compared to basic application of $\texttt{LEATHER}$ based algorithms without augmentation on the whole dataset. Due to the complexity of this comparison, which is dependent on augmentation strategies, estimation error, etc., we leave formal comparison to future work and simply conjecture on the potential for worse data-efficiency of data augmentation strategies in the main text. Albeit, this hypothesis is confirmed in experiments throughout Section~\ref{sec:results}, and it should be noted our main argument here is that the data-efficiency of augmentation strategies needs to be considered, where it has previously not been in most literature.
\subsubsection{Assumption of Pairwise Independence}
\label{sec:pairwise_ind}
As mentioned in the main text, the assumption of pairwise independence is not an overly strong assumption. Conditional to the context $C$, pairwise independence stipulates realizations of the random values $D$, $\hat{D}$, and $A$ do not provide additional information about each other once we know $C=c$. For example, in \textit{GuessWhat?!}, knowing the gender does not impact our expectation of the QA pairs, once the image is already known. Alternatively, knowing predicted QAs does not change our expectation about human QAs, after the image is known. The latter is not so intuitive, but independence of predictions on (test) outcomes and the outcomes themselves is common among many simple learning models (e.g., fixed effects linear regression) since the learned parameters are only dependent on the i.i.d. training outcomes.
\subsection{Labeling Scheme}
\label{sec:label_scheme}
As noted, the labeling scheme for the protected attribute studied in the main text allows us to satisfy some of the key assumptions (on the human data) stipulated by Thm.~\ref{thm:learning_bound}: \textit{context awareness} (Def.~\ref{defn:c-awar}) and \textit{context preservation} (Def.~\ref{defn:c-pres}). To see this, we show that there exists an equitable goal according to score parity with scoring function defined in Eq.~\eqref{eqn:ap_choice}, and importantly, that this equitable goal is related to the human data as specified by Defs.~\ref{defn:c-awar} and \ref{defn:c-pres}. In turn, the existence of such an equitable goal implies that the human data and scoring function we study in the experiments does indeed satisfy Def.~\ref{defn:c-awar} and Def.~\ref{defn:c-pres}.

\paragraph{Construction of Goal} To begin, consider some random variables $(D, C, A)$ with the below constraints, and let $(\tilde{D}, \tilde{C}, \tilde{A})$ correspond to random variables for the human data as before. These will be used to construct the equitable goal we have just previously discussed:
\begin{equation}\label{eqn:constraints}
\begin{split}
\mathbf{Pr}(D = d \mid C = c) = \mathbf{Pr}(\tilde{D} = d \mid \tilde{C} = c), \\
\mathbf{Pr}(C = c \mid A = a) = \mathbf{Pr}(\tilde{C} = c \mid \tilde{A} = a), \\
\mathbf{Pr}(A = 0) = \mathbf{Pr}(A = 1).
\end{split}
\end{equation}
Now, also assume $D$ is independent of $A$ given $C$ (that is, \textbf{A3} in Thm.~\ref{thm:learning_bound}), so we can decompose the joint distribution of $(D, C, A)$ according to our constraints:
\begin{equation}
\begin{split}
    & \mathbf{Pr}(D=d, C=c, A=a) = \mathbf{Pr}(D=d, C=c \mid A=a) \mathbf{Pr}(A=a) \\
    & = \mathbf{Pr}(D=d \mid C=d, A=a) \mathbf{Pr}(C=c \mid A=a) \mathbf{Pr}(A=a) \\
    & = \mathbf{Pr}(D=d \mid C=c) \mathbf{Pr}(C=c \mid A=a) \mathbf{Pr}(A=a) \quad\text{(cond. indep. constraint \textbf{A3})} \\
    & = \mathbf{Pr}(\tilde{D}=d \mid \tilde{C}=c) \mathbf{Pr}(\tilde{C}=c \mid \tilde{A}=a) \mathbf{Pr}(A=a) \quad\text{(Eq.~\ref{eqn:constraints} constraints)} \\
\end{split}
\end{equation}
Next, we verify there are distributions with this joint density with total probability summing to 1. To do this, we re-use the above expansion to arrive at:
\begin{equation}\label{eqn:densitiy_simple}
\begin{split}
  & \sum_{d,c,a}  \mathbf{Pr}(D=d, C=c, A=a)
  = \sum_{d,c,a} \mathbf{Pr}(\tilde{D}=d \mid \tilde{C}=c) \mathbf{Pr}(\tilde{C}=c \mid \tilde{A}=a) \mathbf{Pr}(A=a) \\
  & = \frac{1}{2} \sum_{d,c,a} \mathbf{Pr}(\tilde{D}=d \mid \tilde{C}=c) \mathbf{Pr}(\tilde{C}=c \mid \tilde{A}=a) \quad\text{(assumed constraint on }A)\\
  & := \frac{1}{2} \Big [ x(1) + x(0) \Big ]\quad\text{(use }x(a)\text{ as a shorthand for the sum over }d,c)
\end{split}
\end{equation}
Simultaneously, since $(\tilde{D}, \tilde{C}, \tilde{A})$ already correspond to a distribution, we can use similar logic (i.e., LTE and conditional independence) to expand the sum over this distribution's joint density. In doing so, we must have
\begin{equation}
    1 = \mathbf{Pr}(\tilde{A} = 0) \cdot x(0) + \mathbf{Pr}(\tilde{A} = 1) \cdot x(1) := a \times x(1) + b \times x(0)\quad\text{(defining shorthand)}.
\end{equation}
So, the density in Eq.~\eqref{eqn:densitiy_simple} has total probability summing to 1 if there is a solution with $a,b \in [0,1]$ and $a + b = 1$ to the following system:
\begin{equation}
\begin{split}
    1 = \frac{1}{2} \Big [ x(1) + x(0) \Big ] \\
    1 = a \times x(1) + b \times x(0).
\end{split}
\end{equation}
If $a \neq b \neq 1/2$, there are solutions $a,b \in [0,1]$ with $a+b=1$ as long as $x(1) = x(0)$, which is indeed true, since due to (\textbf{A3}) $x(a)$ can be re-written as a conditional joint probability over $\tilde{D}$ and $\tilde{C}$. So, $x(1) = x(0) = 1$. Note, the other axioms of probabilities follow directly because the constraints only restrict the probabilities for $(D,C,A)$ to existing (known) probability functions. Thus, we know a distribution satisfying the needed constraints in Eq.~\eqref{eqn:constraints} exists. Specifically, a distribution related to the human data as specified by Defs.~\ref{defn:c-awar} and \ref{defn:c-pres} exists, and we have shown the desired result.

\paragraph{Equity of Goal} Finally, it remains to see how the distribution corresponding to $(D,C,A)$ is equitable. Score parity follows easily by definition of $\tilde{A} = v(\tilde{D})$. In particular, the test divergence on the human data is 0, so Eq.~\eqref{eqn:finale} implies the test divergence on the distribution of $(D,C,A)$ is 0, and so Thm.~\ref{thm:sp_reduction} implies the parity gap for the distribution of $(D,C,A)$ is 0. Balance of the distribution of $(D,C,A)$ also follows easily from the final constraint in Eq.~\eqref{eqn:constraints}, and so we are done.
\subsection{Downsampling}
\label{sec:downsampling}
The downsampling process for the \texttt{DS} algorithm restricts to images which are determined to have either of the protected attributes — i.e., $a=1$ when \textbf{M} is the protected attribute or $a=1$ when \textbf{F} is the protected attribute — such that there are an equal number of occurrences of $a=1$ for both protected attributes. That is, in the end result, the new training dataset has an equal number of occurrences where annotator consensus identified a male or a female, and all other images are thrown out. This is achieved through a simple randomized filtering approach. As noted, images without $a=1$ for either protected attribute are also thrown out. This allows us to ensure we are training a (single) model that will be equitable on both protected attributes simultaneously,\footnote{If we include images without labels, we cannot be sure of equal occurrence of both attributes.} which is the primary goal in evaluation. Note, this strategy does not hurt the object identification accuracy either (as evidenced by empirical results). This may be for two reasons: first, other objects (besides persons) appear frequently enough in the downsampled dataset as to not effect performance; second, downsampling is only used in the cooperative learning phase, and object recognition ability is primarily learned in the pre-training phase. As alluded in our theoretical discussion, another consequence of this augmentation strategy is that the number of i.i.d. data points is greatly reduced in the cooperative learning phase (e.g., compared to the \texttt{LEATHER}-based algorithm); i.e., we estimate less than 1/6th of the original dataset is used. Therefore, this indeed presents a good example to test our theoretical hypotheses on the impacts of data augmentation and data-inefficiency.

Downsampling to create the equitable distribution is done in a similar manner, except  -- since we don't need to worry about inefficiency in model training any longer -- a separate dataset is created for each protected attribute. So, there is one dataset with balanced occurrences of $a=1$ and $a=0$ when the protected attribute is \textbf{M}, and another dataset with balanced occurrences when the attribute is \textbf{F}. Importantly, because labeling scheme enforces our assumptions about context hold in the human data (see Appendix~\ref{sec:label_scheme}), this should create an equitable goal. 
\begin{figure*}
    \centering
    \includegraphics[width=0.9\linewidth]{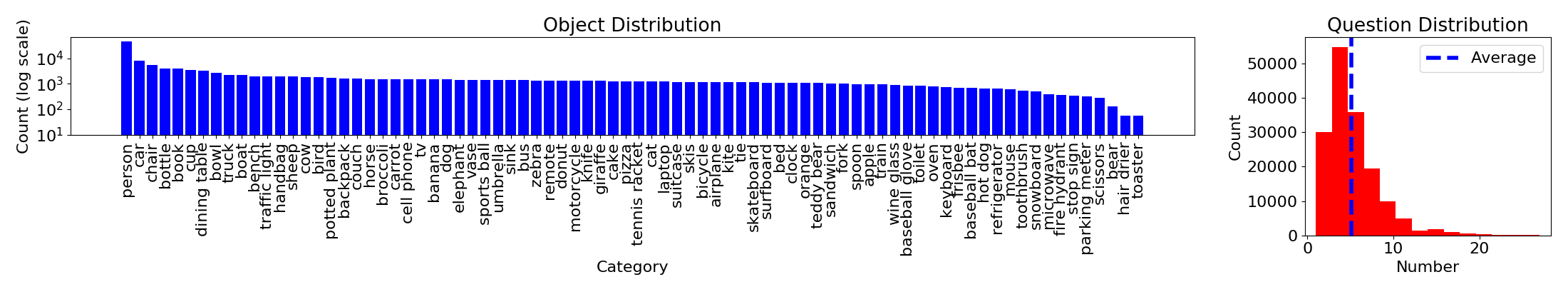}
    \caption{Statistics from the \textit{GuessWhat?!} dataset \citep{de2017guesswhat}.}
    \label{fig:object_dist}
\end{figure*}
\subsection{\textit{GuessWhat?!} Game Rules and Statistics}
\label{sec:guesswhat}
Here, we introduce the \textit{GuessWhat?!} visual dialogue game \citep{de2017guesswhat}. We use this game as a running example to ground abstract theoretical concepts in practical application. \textbf{Importantly}, our theoretical study is \textit{more generally applicable} (i.e., beyond just this example). Statistics on object distribution and dialogue length are provided in Figure~\ref{fig:object_dist}. After applying the labeling scheme and downsampling (as just described), our dataset consists of about 3200 (half with $a=1$) when \textbf{F} is the protected attribute and 6400 (half with $a=1$) when \textbf{M} is the protected attribute. Note, this also indicates that the ratio of \textbf{M} to \textbf{F} in the original dataset is about 2 to 1.
\paragraph{Gameplay} An image and \textbf{goal-object} within the image are both randomly chosen.
A \textbf{question-player} with access to the image asks yes/no questions to an \textbf{answer-player} who has access to both the image and goal-object. 
The question-player's goal is to identify the goal-object. The answer-player's goal is to reveal the goal-object to the question-player by answering the yes/no questions appropriately.
The question- and answer-player converse until the question-player is ready to make a guess or at most $m$ questions have been asked.\footnote{By default, $m=8$ following \citet{shekhar-etal-2019-beyond}.} The question-player then guesses which object was the secret goal.
\subsection{Cooperative Learning}
\label{sec:cl}
Cooperative Learning generates questions $\hat{Q}_i$ and object guess $\hat{O}$ based on answer player answers $A_i$ as below:%
\begin{equation}\small
\label{eqn:shekar_models}
\begin{split}
  & \hat{O} = \mathtt{Gues}_\alpha(\mathtt{Enc}_\beta(I, \hat{D})) \\
  & \hat{Q}_{i+1} = \mathtt{QGen}_\theta(\mathtt{Enc}_\beta(I, \hat{Q}_1, A_1, \ldots \hat{Q}_i, A_i).
\end{split}
\end{equation}%
The neural-model $\mathtt{QGen}_\theta$ is called the \textit{question-generator} and the neural-model $\mathtt{Gues}_\alpha$ is called the \textit{object-guesser}. The final neural-model $\mathtt{Enc}_\beta$ is called the \textit{encoder} and captures pertinent features for the former models to share.
All model parameters ($\alpha, \beta, \theta$) are first pre-trained on human-human dialogue and then the model-components are further updated through cooperative \textit{self-play} \citep{das2017learning}, in which the model-components and an automated answer-player play new games (machine-machine dialogue) to continue the learning process. The shared encoder is used to improve human-likeness of questions \citep{shekhar-etal-2019-beyond}.

Note, the change from Cooperative Learning (above) to Cooperative Learning with \texttt{LEATHER} simply incorporates additional human data during training the above model, instead of using only machine-machine dialogue. See \citet{leather} for more details on both approaches to cooperative learning.
\end{document}